    \definecolor{plum}  {rgb}{.4,0,.4}
    \definecolor{BrickRed} {rgb}{0.6,0,0}
	\definecolor{DarkBlue} {rgb}{0,0,0.6}
\def\ddefloop#1{\ifx\ddefloop#1\else\ddef{#1}\expandafter\ddefloop\fi}
\def\ddef#1{\expandafter\def\csname b#1\endcsname{\ensuremath{\boldsymbol{#1}}}}
\def\ddef#1{\expandafter\def\csname c#1\endcsname{\ensuremath{\mathcal{#1}}}}
\def\ddef#1{\expandafter\def\csname s#1\endcsname{\ensuremath{\mathsf{#1}}}}
\def\Reals{{\mathbb R}}
\def\Tr{{\mathsf T}} 
\newtheorem{theorem}{Theorem}
\newtheorem{lemma}{Lemma}
\newtheorem{remark}{Remark}
\begin{document}

\title{Expressivity of Quadratic Neural ODEs}

\author{Joshua Hanson and Maxim Raginsky
\thanks{This work was supported in part by the NSF under award CCF-2106358 (``Analysis and Geometry of Neural Dynamical Systems'') and in part by the Illinois Institute for Data Science and Dynamical Systems (iDS${}^2$), an NSF HDR TRIPODS institute, under award CCF-1934986.}
\thanks{Joshua Hanson was with University of Illinois, Urbana, IL 61801 USA. He is now with Error Corp and Sandia National Laboratories (e-mails: josh@error-corp.com and jmhanso@sandia.gov). }
\thanks{Maxim Raginsky is with the Department of Electrical and Computer Engineering and the Coordinated Science Laboratory, University of Illinois, Urbana, IL 61801, USA (e-mail: maxim@illinois.edu).}}

\maketitle

\begin{abstract}

This work focuses on deriving quantitative approximation error bounds for neural ordinary differential equations having at most quadratic nonlinearities in the dynamics. The simple dynamics of this model form demonstrates how expressivity can be derived primarily from iteratively composing many basic elementary operations, versus from the complexity of those elementary operations themselves. Like the analog differential analyzer and universal polynomial DAEs, the expressivity is derived instead primarily from the ``depth'' of the model. These results contribute to our understanding of what depth specifically imparts to the capabilities of deep learning architectures.

\end{abstract}

\begin{IEEEkeywords}

Neural networks, nonlinear systems, machine learning, universal approximation.

\end{IEEEkeywords}

	\thispagestyle{empty}

\section{Introduction}\label{sec:introduction}

Historical works studying ``differential analyzers'' --- mechanical analog computers engineered to solve differential equations using disc-and-wheel integrators --- demonstrate that even extremely simple elementary operations are sufficient to efficiently synthesize rich function classes when these operations are interconnected and iteratively composed \cite{Bush_1927}, \cite{Bush_1931}, \cite{Shannon_1941}. These early computers achieved multiplication as a special case of integrating constant functions, and derived addition and subtraction from averagers implemented via differential drive shafts. Later analysis characterized the classes of functions generable by analog computers \cite{Pour-El_1974}, which are notably very large. Analog-generable function classes are connected to the solution spaces of so-called universal differential equations \cite{Rubel_1981}, \cite{Duffin_1981} --- polynomial differential algebraic equations whose set of admissible solutions, which is finitely parametrized by the set of initial conditions, can approximate arbitrary continuous functions. Universal differential equations can also be represented as equivalent quadratic vector fields through lifting transformations.

Similar in spirit to universal differential equations, there exist neural networks with bounded width and bounded depth that achieve universal approximation using only a finite number of parameters --- that is, the number of parameters required does not depend on the approximation error tolerance. This case was first studied in \cite{Maiorov_1999}, where the main theorem guarantees the existence of an analytic, strictly monotone, sigmoidal activation function such that the set of functions computable by bounded-width nets with two hidden layers using this activation are dense in the set of compactly supported, continuous functions. Here, all the complexity is pushed into the activation function, which does not satisfy any finite-order differential equation. This result is similar in spirit to the theorem in complex analysis that states that the Riemann zeta function can approximate arbitrary holomorphic functions with no zeros on certain compact sets of a strip of the complex plane by simply vertically shifting the domain \cite{Voronin_1975}. In this case, universal approximation is achieved using only a single real parameter, with the tradeoff being that the Riemann zeta function itself is extraordinarily rich and thus can be challenging to compute.

More practical activation functions for which finitely parametrized nets with bounded width, bounded depth, and fixed weights are universal approximators can also be constructed algorithmically \cite{Guliyev_2018}. Although this construction depends on the approximation tolerance, the procedure can be implemented using a finite number of elementary operations on a computer. In contrast to designing intricate activations in advance, elementary operations can instead be built directly into the network, resulting in a model with very simple layers that derives its expressivity primarily from its depth instead of its activations or its width.

The problem of function approximation can be viewed from a control-theoretic perspective with the help of continuous-time idealizations of deep neural nets, so-called neural ODEs (see \cite{Ruiz_Balet_2023} for a control-oriented overview). A neural ODE is simply a controlled dynamical system of the form
\begin{align*}
	\dot{y}(t) = f(y(t),\theta(t)),
\end{align*}
augmented with a linear read-in map $\phi$ that takes a $d$-dimensional input $x \in \Reals^d$ to a $W$-dimensional initial condition $y(0) = \phi(x)$ and a linear read-out map $\psi$ that takes the state $y(T)$ at some fixed terminal time $T > 0$ to a scalar output $\hat{f}(x) := \psi(y(T))$. The finite-dimensional controls $\theta(t)$, for $t \in [0,T]$, are then selected so that $\hat{f}(x)$ approximates a given target $f(x)$ in a suitable sense for all $x$ in some compact domain $K \subset \Reals^d$.

Selecting the controls to perform function approximation can be posed as an optimal control problem, where the terminal cost to minimize (or terminal endpoint constraint to drive to zero) is the deviation between an ensemble of target outputs and those outputs resulting from steering an ensemble of provided inputs (initial states) under the candidate control law \cite{Han_2018}, \cite{Agrachev_2022}. Universal approximation results that guarantee function approximation for deep networks can likewise be interpreted as statements establishing ensemble controllability of the underlying systems, and vice versa \cite{Sontag_1997}, \cite{Tabuada_2023}. These works make use of system-theoretic methods to guarantee the existence of an open-loop control to perform the desired function approximation or interpolation. Specifically, for a given approximation accuracy $\epsilon$, the idea in these works is to replace the domain $K$ by a finite grid of resolution $\epsilon$, then use Lie-theoretic controllability arguments to argue that there exists a single open-loop control to steer each point in the grid to its image under the target $f$, and then show that this control suffices to give an $\epsilon$-approximation to $f$ on all of $K$. By contrast, in this work we will instead use a direct construction and describe the controls explicitly.

In this paper, we study the role of depth in expressivity of deep networks by developing two main approximation error bounds for quadratic neural ODEs with simple nonlinearities but unlimited composition. In Section \ref{sec:model}, we describe the model architecture. The main results are stated and proved in Section \ref{sec:main_results}. The first theorem provides quantitative approximation error bounds for smooth functions in a unit Sobolev ball. The second theorem provides approximation error bounds for standard deep feedforward nets. These constructions are based on building certain elementary functions such as $\tanh$, $\exp$, $\ln$ etc. as solutions to quadratic ODEs, then using these pieces to generate, for example, a partition of unity supporting local polynomial approximations (in the case of approximating functions in a Sobolev ball), or constructing alternating compositions of affine maps and nonlinear activation functions (in the case of approximating deep feedforward nets). These results demonstrate that deep models do not require very complex individual layers or dynamics to achieve high overall expressivity, and makes a step towards understanding quantitatively what is contributed by ``depth'' to the approximation capability of a model in contrast to what is contributed by the ``width.''

\section{Model description}\label{sec:model}

Consider a time-varying quadratic neural ODE where the input $x \in \Reals^d$ enters through the initial condition as follows:
\begin{equation}\label{eq:neural_ode}
\begin{split}
    \dot{y}_i(t) &= \sum_{j=1}^W a_{ij}(t) y_j(t) + \sum_{j,k=1}^W q_{ijk}(t) y_j(t) y_k(t) + b_i(t) \\
    y_i(0) &= \begin{cases} x_i & \text{for } i=1,\dots,d \\ 0 & \text{for } i=d+1,\dots,W \end{cases}
\end{split}
\end{equation}
where the controls ${a_{ij} : [0,T] \to \Reals}$, ${q_{ijk} : [0,T] \to \Reals}$, ${b_i : [0,T] \to \Reals}$ for ${i,j,k = 1,\dots,W}$ serve the role of the weights and biases in a traditional feedforward net. The scalar output is given by a linear read-out of the terminal state ${c^\Tr y(T) \in \Reals}$ for some vector ${c \in \Reals^W}$. The state dimension $W$ is analogous to the ``width'', and the time horizon $T$ is analog to the ``depth.'' The time variable $t$ can be interpreted as a continuous layer index.

The purpose of considering this model architecture in particular is to demonstrate that high complexity nonlinear elements (activation functions) are not necessary to attain high expressivity. Instead, iterative composition (deep networks) of even very simple nonlinear operations is sufficient, and perhaps the most efficient method, to generate arbitrarily expressive features. We will show that piecewise-constant controls ${a_{ij} : [0,T] \to \Reals}$, ${q_{ijk} : [0,T] \to \Reals}$, ${b_i : [0,T] \to \Reals}$ can be explicitly constructed such that the resulting map sending initial states to terminal outputs $\hat{f} : \Reals^d \to \Reals$, $x \mapsto c^\Tr y(T)$ can approximate any continuous function $f : \Reals^d \to \Reals$ on a compact set. Naturally, the complexity of the controls depends on the required approximation error and the regularity of the desired function to approximate, which will be quantified in the main theorems.

The time horizon $T$ above can always be normalized to 1 by rescaling the dynamics, so the ``stiffness'' of the ODE --- which might be quantified in terms of maximum magnitude of the eigenvalues of the Jacobian of the dynamics, which is in turn directly related to the magnitude of the controls $a_{ij}, q_{ijk}, b_i$ and initial condition $x$ --- is perhaps a more appropriate analogy for the depth of the net, because this affects how many serialized computational operations must be performed to actually evaluate the neural ODE when using a numerical discretization. High-frequency, high-gain controls could be used to constructively approximate Lie bracket directions in control-affine systems; see \cite{Tabuada_2023}. Although the time-horizon may be small (based on the control gain), this leads to ``stiff'' ODEs that require a large number of timesteps to resolve via numerical integration, thus these models could be interpreted to have large ``depth.'' Simplifying the control functions yields shallower discretized nets because the ODEs become less stiff and ultimately require fewer ``time-steps'' to evaluate --- the constructions in our results attempt to respect this consideration.

\section{Main results}\label{sec:main_results}

In this section, we study the expressivity of time-varying quadratic neural ODEs by building quantitative estimates of the approximation error for some function classes of interest. The first theorem adapts the construction in \cite{Yarotsky_2017} for deep ReLU net approximations of functions in the unit ball of the Sobolev space $\cW^{n,\infty}([0, 1]^d)$ based on local truncated Taylor series stitched together using a partition of unity. The second theorem constructs an approximation for standard feedforward neural nets with $\tanh$ activations, which can be extended to networks with other activation functions that can be expressed as (or approximated by) the solution to a quadratic ODE.

In the following lemmas we will construct time-invariant quadratic ODEs to realize the nonlinear functions $\tanh$ and $\ln$ and multivariate monomials of arbitrary degree. Then, in proving the main results, we will apply these constructions to design explicit piecewise-constant controls for a quadratic neural ODE to approximate arbitrary smooth functions on $[0,1]^d$ with bounded Sobolev norm (Theorem \ref{thm:sobolev}), and deep feedforward nets with $\tanh$ activations (Theorem \ref{thm:feedforward}).

\begin{lemma}\label{lem:tanh}
	Let $a,b \in \Reals$. The initial value problem (IVP)
	\begin{alignat*}{2}
		\dot{y}_1 &= 0,\quad & y_1(0) &= \xi \\
		\dot{y}_2 &= (a y_1 + b) - y_2 y_3,\quad & y_2(0) &= 0 \\
		\dot{y}_3 &= (a y_1 + b)^2 - y_3^2,\quad & y_3(0) &= 0
	\end{alignat*}
	has time-1 solution
	\begin{align*}
		y_1(1) &= \xi \\
		y_2(1) &= \tanh(a \xi + b) \\
		y_3(1) &= (a \xi + b) \tanh(a \xi + b).
	\end{align*}
\end{lemma}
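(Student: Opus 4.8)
The plan is to solve the IVP in closed form and then read off the time-$1$ values, invoking uniqueness of solutions to justify that the explicit formula we exhibit is \emph{the} solution. Since $\dot{y}_1 = 0$ with $y_1(0) = \xi$, we immediately get $y_1(t) = \xi$ for all $t$, so $c := a\xi + b$ is a constant and the remaining two equations form the planar autonomous system $\dot{y}_2 = c - y_2 y_3$, $\dot{y}_3 = c^2 - y_3^2$ with $y_2(0) = y_3(0) = 0$. The right-hand side is polynomial, hence locally Lipschitz, so it suffices to produce one solution defined on $[0,1]$ and appeal to Picard--Lindel\"of.

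First I would treat the $y_3$ equation, which decouples from $y_2$: it is the separable (Riccati-type) ODE $\dot{y}_3 = c^2 - y_3^2$, and I claim that the solution with $y_3(0) = 0$ is $y_3(t) = c\tanh(ct)$. This is checked directly from $\tfrac{d}{dt}\tanh(ct) = c\,\sech^2(ct) = c\,(1 - \tanh^2(ct))$, which gives $\dot{y}_3 = c^2(1 - \tanh^2(ct)) = c^2 - y_3^2$, together with $y_3(0) = c\tanh 0 = 0$. The degenerate case $c = 0$ is covered by the same formula (then $y_3 \equiv 0$).

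Next, substituting $y_3(t) = c\tanh(ct)$ into the $y_2$ equation turns it into the first-order linear ODE $\dot{y}_2 + c\tanh(ct)\,y_2 = c$. Since an antiderivative of $c\tanh(ct)$ is $\ln\cosh(ct)$, the function $\cosh(ct)$ is an integrating factor, so $\tfrac{d}{dt}\!\big(\cosh(ct)\,y_2(t)\big) = c\cosh(ct)$. Integrating from $0$ to $t$ and using $y_2(0) = 0$ yields $\cosh(ct)\,y_2(t) = \sinh(ct)$, i.e.\ $y_2(t) = \tanh(ct)$. Evaluating all three components at $t = 1$ then gives $y_1(1) = \xi$, $y_2(1) = \tanh(a\xi + b)$, and $y_3(1) = (a\xi + b)\tanh(a\xi + b)$, as claimed.

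There is no real obstacle here: the lemma is a verification, and the only points needing (minor) care are global existence on $[0,1]$ — immediate, since the candidate solution is bounded, $|y_2|\le 1$ and $|y_3|\le |c|$, so it cannot escape in finite time — and the edge case $c = 0$. The same ``write down the solution, then verify'' strategy, exploiting the decoupling induced by the constant component $y_1$, is exactly the template I would reuse for the companion lemmas that realize $\ln$ and the multivariate monomials.
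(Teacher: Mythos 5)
Your proof is correct, and it reaches the conclusion by a slightly different route than the paper. The paper works ``backwards'': it starts from the known scalar flow $\dot{y}_2 = (a y_1 + b)(1 - y_2^2)$, whose time-$1$ map from $0$ is $\tanh(a\xi+b)$, and then introduces the auxiliary state $y_3 = (a y_1 + b)\, y_2$ to quadratize the cubic term, which is exactly how the system in the lemma was constructed; the formula for $y_3(1)$ is then a one-line substitution. You instead take the stated quadratic system at face value and solve it forward, exploiting its triangular structure: the Riccati equation $\dot{y}_3 = c^2 - y_3^2$ decouples and gives $y_3(t) = c\tanh(ct)$, after which the $y_2$ equation becomes linear and is dispatched with the integrating factor $\cosh(ct)$, with Picard--Lindel\"of invoked to identify the exhibited solution as the unique one. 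Your version is somewhat more self-contained and more careful about the points the paper leaves implicit (uniqueness, the degenerate case $c=0$, and existence on all of $[0,1]$), while the paper's derivation has the expository advantage of explaining \emph{why} the system has this form --- the order-reduction trick $y_3 = (a y_1 + b) y_2$ is the template reused in the later constructions. Either argument fully establishes the lemma.
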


\begin{proof}
	Since $\dot{y}_1 = 0$, we can treat $a y_1 + b$ as a constant. We can construct $\tanh$ as the solution of the following scalar ODE
	\begin{equation*}
		\dot{y}_2 = (a y_1 + b)(1-y_2^2),
	\end{equation*}
	which satisfies
	\begin{align*}
		y_2(0) &= 0 \\
		y_2(1) &= \tanh(a \xi + b).
	\end{align*}
	To reduce the order of the cubic term $y_1 y_2^2$, define an additional state $y_3 = (a y_1 + b) y_2$, then
	\begin{equation*}
		\dot{y}_3 = (a y_1 + b) \dot{y_2} = (a y_1 + b)^2 - y_3^2.
	\end{equation*}
	The expression for $y_3(1)$ in the statement of the lemma can be verified via direct substitution or separation of variables.
\end{proof}

\begin{lemma}\label{lem:ln}
	Let $\xi > 0$. The IVP
	\begin{alignat*}{2}
		\dot{y}_1 &= y_2, \quad & y_1(0) &= 0 \\
		\dot{y}_2 &= -y_2^2,\quad & y_2(0) &= \xi - 1
	\end{alignat*}
	has time-1 solution
	\begin{align*}
		y_1(1) &= \ln(\xi) \\
		y_2(1) &= (\xi-1) / \xi.
	\end{align*}
\end{lemma}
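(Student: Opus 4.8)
The plan is to exploit the triangular (decoupled) structure: the second equation does not involve $y_1$, so I would solve for $y_2$ first and then recover $y_1$ by quadrature. The scalar ODE $\dot y_2 = -y_2^2$ is separable; equivalently, wherever $y_2 \neq 0$ one has $\frac{d}{dt}\bigl(1/y_2\bigr) = -\dot y_2/y_2^2 = 1$, so $1/y_2(t) = 1/(\xi-1) + t$, giving the closed form $y_2(t) = \frac{\xi-1}{1+(\xi-1)t}$. (If $\xi = 1$ the unique solution is $y_2 \equiv 0$, and both claimed formulas hold trivially, so one may assume $\xi \neq 1$.)

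The one point that needs care — and the step I expect to be the only real obstacle — is verifying that this closed-form expression is genuinely the solution on all of $[0,1]$, i.e. that the denominator $1 + (\xi-1)t$ does not vanish there and no finite-time blow-up occurs before $t = 1$. This is exactly where the hypothesis $\xi > 0$ is used: for $t \in [0,1]$ we can write $1 + (\xi-1)t = (1-t)\cdot 1 + t\cdot\xi$, a convex combination of $1$ and $\xi$, hence strictly positive; so by uniqueness for the locally Lipschitz right-hand side, the formula is the actual solution of the IVP on $[0,1]$. Evaluating at $t = 1$ gives $y_2(1) = \frac{\xi-1}{\xi}$. It is worth emphasizing this step precisely, since scalar quadratic ODEs generically escape to infinity in finite time, and it is the constraint $\xi > 0$ that pushes the blow-up time $1/(1-\xi)$ strictly past $1$.

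With $y_2$ determined, $y_1$ follows by integration: $y_1(t) = \int_0^t y_2(s)\,ds = \int_0^t \frac{\xi-1}{1+(\xi-1)s}\,ds = \ln\bigl(1+(\xi-1)t\bigr)$, using $y_1(0) = 0$ and the positivity of the denominator just established to justify the logarithm. Setting $t = 1$ yields $y_1(1) = \ln\bigl(1+(\xi-1)\bigr) = \ln\xi$, as claimed. The remainder is routine verification, and one could alternatively confirm both formulas by direct substitution into the ODEs and checking the initial conditions.
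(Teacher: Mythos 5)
Your proof is correct and follows essentially the same route as the paper's: solve the separable equation $\dot{y}_2 = -y_2^2$ in closed form as $y_2(t) = \frac{y_2(0)}{1 + y_2(0)t}$, integrate to obtain $y_1(1) = \ln(y_2(0)+1) = \ln\xi$, and verify $y_2(1)$ by substitution. Your explicit check that $1+(\xi-1)t > 0$ on $[0,1]$ (so no finite-time blow-up occurs) is a point the paper relegates to the remark following the lemma, but it is the same observation that the escape time lies beyond $t=1$ whenever $\xi > 0$.
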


\begin{proof}
	The scalar ODE $\dot{y}_2 = -y_2^2$ has general solution
	\begin{equation*}
		y_2(t) = \frac{y_2(0)}{y_2(0)t + 1},
	\end{equation*}
	which can be derived using separation of variables. Integrating this expression yields
	\begin{equation*}
		y_1(1) = y_1(0) + \int_0^1 \frac{y_2(0)}{y_2(0)t + 1} \dif t = \ln(y_2(0) + 1) = \ln(\xi).
	\end{equation*}
	The expression for $y_2(1)$ can be verified by substituting $y_2(0) = \xi - 1$ and $t = 1$ into the general solution.
\end{proof}

\begin{remark}
	In the proof of Lemma \ref{lem:ln}, for initial conditions $y_2(0) < 0$, the solution of the ODE $\dot{y}_2 = -y_2^2$ diverges in finite time; however the escape time occurs after time $1$ if $y_2(0) > -1$ (which holds provided that $\xi > 0$). While not strictly necessary, strengthening the assumption ${\xi > 0}$ to ${\xi > 1}$ eliminates this concern, as this causes the escape time to become negative. In the forthcoming proof of Theorem \ref{thm:sobolev}, this amounts to choosing $\Delta > 1$ and adding $1$ to each function $\psi_k$. Constructing the approximation $\hat{f}$ then will require forming additional monomials in $\psi_0+1,\dots,\psi_N+1$, but these can be generated using the same procedure as outlined in the proof.
\end{remark}

\begin{lemma}\label{lem:multiply}
	Let $w \in \Reals^d$, $b \in \Reals$, and $\xi \in (0,1]^d$. The IVP
	\begin{alignat*}{2}
		\dot{y}_{1:d} &= 0,\quad & y_{1:d}(0) &= \ln(\xi) \\
		\dot{y}_{d+1} &= (w^\Tr y_{1:d} + b) y_{d+1},\quad & y_{d+1}(0) &= 1
	\end{alignat*}
	has time-1 solution
	\begin{align*}
		y_{1:d}(1) &= \ln(\xi) \\
		y_{d+1}(1) &= e^b \xi_1^{w_1} \cdots \xi_d^{w_d}.
	\end{align*}
\end{lemma}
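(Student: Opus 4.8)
The plan is to exploit the block-triangular structure of the system: the first $d$ coordinates are frozen, and the last coordinate then satisfies a scalar \emph{linear} ODE with constant coefficient. First I would observe that since $\dot y_{1:d} = 0$ we have $y_{1:d}(t) \equiv y_{1:d}(0) = \ln(\xi)$ for all $t \in [0,1]$ (this is where $\xi \in (0,1]^d$, or more weakly $\xi > 0$, is used, so that $\ln(\xi_i)$ is well-defined). Consequently the scalar quantity $\lambda := w^\Tr y_{1:d}(t) + b = w^\Tr \ln(\xi) + b = \sum_{i=1}^d w_i \ln(\xi_i) + b$ is a constant, independent of $t$.

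Next I would substitute this into the equation for $y_{d+1}$, which becomes $\dot y_{d+1} = \lambda\, y_{d+1}$ with $y_{d+1}(0) = 1$. This is the elementary linear scalar IVP whose unique solution is $y_{d+1}(t) = e^{\lambda t}$; evaluating at $t = 1$ gives $y_{d+1}(1) = e^{\lambda}$.

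Finally I would unpack $e^{\lambda}$ using the identity $e^{\ln a} = a$ for $a > 0$:
\begin{equation*}
	y_{d+1}(1) = e^{\lambda} = \exp\!\left(b + \sum_{i=1}^d w_i \ln(\xi_i)\right) = e^b \prod_{i=1}^d e^{w_i \ln(\xi_i)} = e^b\, \xi_1^{w_1} \cdots \xi_d^{w_d},
\end{equation*}
together with the already-noted fact $y_{1:d}(1) = \ln(\xi)$, which completes the proof. I do not anticipate any genuine obstacle here: the only point requiring a word of care is the well-posedness of $\ln(\xi)$, which is guaranteed by the positivity hypothesis on $\xi$, and — in anticipation of feeding the output of Lemma \ref{lem:ln} into this construction — the restriction $\xi_i \le 1$ ensures $\ln(\xi_i) \le 0$, keeping the coordinates in the range where the earlier lemmas apply. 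The combination with Lemma \ref{lem:ln} (to synthesize the $\ln(\xi_i)$ as ODE states) is what turns this into a mechanism for forming arbitrary real-power monomials, and in particular integer monomials of arbitrary degree, purely from quadratic dynamics.
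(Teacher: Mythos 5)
Your argument is correct and follows essentially the same route as the paper: freeze the first $d$ coordinates so that $w^\Tr y_{1:d} + b$ is a constant, solve the resulting scalar linear ODE for $y_{d+1}$, and rewrite $\exp\bigl(b + \sum_i w_i \ln(\xi_i)\bigr)$ as $e^b \xi_1^{w_1}\cdots\xi_d^{w_d}$. The only content in the paper's proof not reflected in yours is the passing observation that the inverse flow (division) is obtained by the substitutions $w \gets -w$, $b \gets -b$, which is an aside rather than part of the verification.
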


\begin{proof}
	We use the notation $y_{1:d}$ to represent the vector $(y_1,\dots,y_d) \in \Reals^d$. Since $\dot{y}_{1:d} = 0$, we can treat $w^\Tr y_{1:d} + b$ as a constant. Then we have
	\begin{align*}
		y_{m+1}(1) &= \exp( w^\Tr y_{1:d}(0) + b ) y_{m+1}(0) \\
		&= \exp( \ln(\xi_1^{w_1}) + \cdots + \ln(\xi_d^{w_d}) + b ) \\
		&= e^b \xi_1^{w_1} \cdots \xi_d^{w_d}.
	\end{align*}
	Notice that we can implement the inverse flow map --- which performs division instead of multiplication --- by making the substitutions $w \gets -w$ and $b \gets -b$.
\end{proof}

\begin{lemma}\label{lem:bootstrap}
	Let $c,\Delta > 0$ and $x \in [0,1]^d$. There is a piecewise-constant control with 7 duration-1 segments that steers \eqref{eq:neural_ode} from the initial state
	\begin{alignat*}{2}
		y_{j}(0) &= x_j \\
		y_{d+1:2(N+2)d}(0) &= 0
	\end{alignat*}
	to the terminal state
	\begin{align*}
		y_{1:(N+2)d}(7) &= 1 \\
		y_{(N+2)d + j}(7) &= \ln(x_j + \Delta) \\
		y_{(N+3)d + (k+1)j}(7) &= \ln(\psi_k(x_j)),
	\end{align*}
	for ${j=1,\dots,d}$ and ${k=0,\dots,N}$, where the functions ${\{\psi_k : [0, 1] \to (0,1) \}_{k \in \{0,\dots,N\}}}$ are defined by
	\begin{align*}
		\psi_0(x) &= \frac{1}{2} - \frac{1}{2} \tanh \left( c \left( x - \gamma_1 \right) \right) \\
		\psi_k(x) &= \frac{1}{2} \tanh \left( c \left( x - \gamma_k \right) \right) - \frac{1}{2} \tanh\left( c \left( x - \gamma_{k+1} \right) \right) \\
		\psi_N(x) &= \frac{1}{2} + \frac{1}{2} \tanh \left( c \left( x - \gamma_N \right) \right)
	\end{align*}
	with $\gamma_k = \frac{2k-1}{2N}$, so that they form a partition of unity, i.e., $\sum_{k=0}^N \psi_k(x) = 1$ for every $x \in [0,1]$.
\end{lemma}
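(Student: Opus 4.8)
The plan is to give the piecewise-constant control explicitly, one duration-$1$ segment at a time, by chaining together the time-$1$ gadgets of Lemmas~\ref{lem:tanh}, \ref{lem:ln} and \ref{lem:multiply} with a few purely linear ``bookkeeping'' segments. The elementary move underlying the bookkeeping is that a single unit segment realizes any affine transformation into scratch space: if a block of coordinates is held frozen ($\dot y_j\equiv 0$, i.e.\ the corresponding entries of $a$, $q$, $b$ vanish) and $z$ is a fresh coordinate with $z(0)=0$, then $\dot z=\sum_j\alpha_j y_j+\beta$ gives $z(1)=\sum_j\alpha_j y_j(0)+\beta$, while $\dot z=\lambda z$ rescales a coordinate to $e^{\lambda}z(0)$ as in Lemma~\ref{lem:multiply}; several such moves may be run in parallel on disjoint coordinate blocks within one segment, and also alongside a gadget as long as the gadget only reads frozen coordinates.

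Before the construction I would dispose of the partition-of-unity claim, which is pure telescoping: writing $h_k:=\tanh(c(x-\gamma_k))$, one has $\sum_{k=0}^{N}\psi_k=\bigl(\tfrac12-\tfrac12 h_1\bigr)+\sum_{k=1}^{N-1}\bigl(\tfrac12 h_k-\tfrac12 h_{k+1}\bigr)+\bigl(\tfrac12+\tfrac12 h_N\bigr)=1$, and this holds for \emph{any} increasing sequence $\gamma_1<\dots<\gamma_N$; the specific choice $\gamma_k=(2k-1)/(2N)$ only matters later, where it centers the $k$th bump on the grid used in Theorem~\ref{thm:sobolev}. Strict monotonicity of $\tanh$ also gives $\psi_k(x)\in(0,1)$ for $x\in[0,1]$, so $\ln\psi_k$ is well defined, as is $\ln(x_j+\Delta)$ since $\Delta>0$; the escape-time concern flagged in the Remark is benign here because the relevant $\ln$-gadget initial datum $\xi-1$ stays above $-1$.

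The seven segments would be, roughly: (i) apply Lemma~\ref{lem:tanh} in parallel over all $j=1,\dots,d$ and breakpoints $k=1,\dots,N$, with $a=c$, $b=-c\gamma_k$, using the frozen coordinate $y_j$ as the common ``$y_1$'' input, to produce $\tanh(c(x_j-\gamma_k))$, while simultaneously forming $x_j+\Delta-1$ in scratch coordinates by constant-rate integration; (ii) run Lemma~\ref{lem:ln} on the $x_j+\Delta-1$ coordinates to get $\ln(x_j+\Delta)$, and in parallel assemble the shifted bumps $\psi_k(x_j)-1$ as affine combinations of the now-frozen $\tanh$ values; (iii) run Lemma~\ref{lem:ln} again, with the $\psi_k(x_j)-1$ coordinates serving as the ``$y_2(0)=\xi-1$'' data, to get $\ln\psi_k(x_j)$; (iv)--(vi) run the $\tanh$ and $\ln$ gadgets \emph{backward} (negated vector fields, driven by the still-available frozen inputs), combined with affine moves, to re-zero the spent auxiliary coordinates, and meanwhile shift the coordinates holding $x_j$ to $x_j+\Delta$; (vii) a final segment rescaling the $x_j+\Delta$ coordinate by $e^{-\ln(x_j+\Delta)}$ (Lemma~\ref{lem:multiply} with the frozen logarithm) so that it lands exactly at $1$, and driving every remaining zeroed coordinate to $1$ by $\dot y\equiv 1$. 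Each step then reduces to direct substitution into the cited lemmas together with the affine-move fact.

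I expect the real difficulty to be bookkeeping rather than analysis. The width is pinned at $W=2(N+2)d$, whereas a naive cascade of the gadgets would call for on the order of $4Nd$ coordinates --- every invocation of Lemma~\ref{lem:tanh} or Lemma~\ref{lem:ln} deposits an auxiliary coordinate that becomes dead weight --- and, worse, the prescribed terminal state leaves no slot for leftover garbage at all. Fitting everything into exactly seven segments therefore hinges on aggressively recycling coordinates (running the elementary gadgets in reverse to re-zero their auxiliaries before reuse, copying an extracted logarithm before any reverse-$\ln$ touches it) and on ordering the operations so that every coordinate earmarked for reuse has been fully read out first --- in particular the $\tanh$ values must be consumed into the bump assembly in step (ii) before the reverse pass clears them. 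Exhibiting one consistent schedule of freezes, forward gadgets, reverse gadgets and affine moves that honors all these data dependencies within seven segments is the technical crux.
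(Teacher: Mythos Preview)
Your overall plan --- chain the $\tanh$, $\ln$, and multiply gadgets with linear bookkeeping --- matches the paper in spirit, and the telescoping check that $\sum_k\psi_k\equiv 1$ is fine. But the schedule you sketch has a coordinate-budget gap at step~(ii) that your closing paragraph anticipates without actually resolving. After step~(i) the width $2(N+2)d$ is almost fully committed: $d$ coordinates hold $x_j$, $Nd$ hold the $\tanh$ values $h_k$, $Nd$ hold the Lemma~\ref{lem:tanh} auxiliaries, and $d$ hold the scratch $x_j+\Delta-1$, leaving only $2d$ zero coordinates. Your freeze-and-copy move therefore has nowhere to write the $(N+1)d$ quantities $\psi_k(x_j)-1$. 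Overwriting the auxiliary block instead leaves the $\tanh$-value block stranded: those coordinates must terminate at $1$, but once the auxiliaries are gone you cannot run the reversed Lemma~\ref{lem:tanh} system on them (the $y_2$ equation needs $y_3$; the scalar $\tanh$ ODE without the auxiliary would be cubic), and you have no logarithm of $h_k$ with which to invoke Lemma~\ref{lem:multiply}. So ``copy into fresh space'' and ``reverse to re-zero'' are genuinely in deadlock here, not merely awaiting a clever ordering.

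The paper breaks the deadlock with a device absent from your outline: it transforms the $\tanh$ block into the $\psi$ block \emph{in place} by a linear flow $\dot\lambda=A\lambda+b$ on a single segment, with $A$ an explicit upper-triangular matrix engineered so that $e^A$ is bidiagonal with $\tfrac12$ on the diagonal and $-\tfrac12$ on the superdiagonal --- exactly the affine map $(1,h_1,\dots,h_N)\mapsto(\psi_0,\dots,\psi_N)$ --- having set $\lambda_{j,0}=1$ during the $\tanh$ step precisely to feed this. No fresh coordinates are consumed, and the unreachable $h_k$ are replaced by $\psi_k$, whose logarithms are computed anyway and can then drive those same coordinates to $1$ via Lemma~\ref{lem:multiply}. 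Accordingly the paper's cleanup (its Steps~5--7) never reverses a gadget: after the $\ln$ step the residual coordinates hold $(\xi-1)/\xi$, and one multiplies by $e^{\ln\xi}=\xi$, adds $1$, and divides by $\xi$ again. To make your route fit into seven segments at width $2(N+2)d$ you would need an analogue of this in-place linear-flow idea; the freeze/copy/reverse toolkit by itself does not suffice.
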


\begin{figure}
	\centering
	\includegraphics[width=0.5\linewidth]{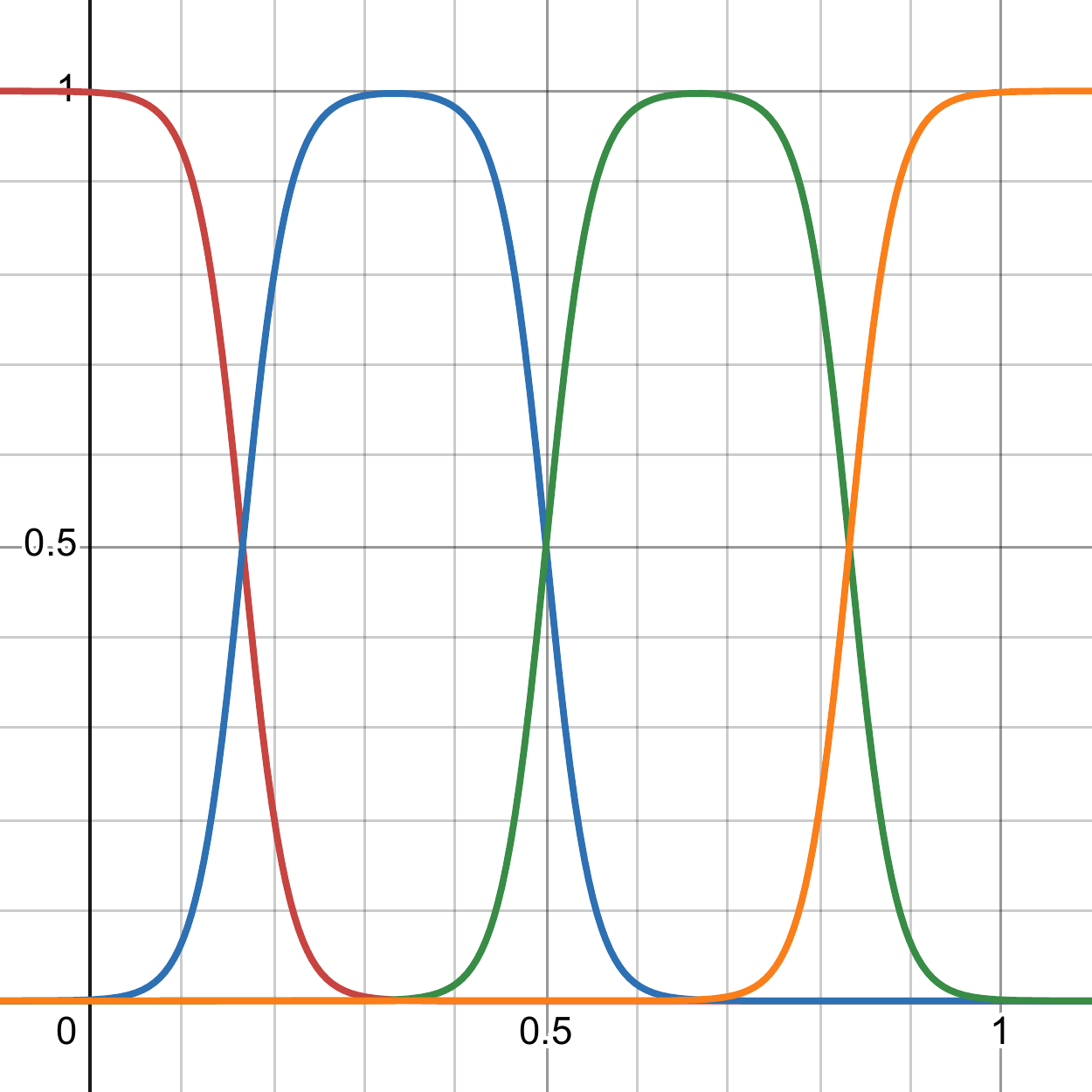}
	\caption{Example partition of unity $\{\psi_k\}_{k=0}^N$ for $N = 3$ and $c = 20$.}
	\label{fig:ac}
\end{figure}

\begin{proof}
	To simplify the subscript notation, define
	\begin{align*}
		\alpha_j &:= y_{j} \\
		\lambda_{j,k} &:= y_{d + (k+1)j} \\
		\beta_j &:= y_{(N+2)d + j} \\
		\mu_{j,k} &:= y_{(N+3)d + (k+1)j}.
	\end{align*}
	In the following steps, any omitted ODEs will have right-hand sides identically equal to zero. For each step, let $j=1,\dots,d$ and unless otherwise specified let $k=0,\dots,N$.
	
		\noindent \textbf{Step 1} (compute $\tanh$): For times $t \in [0,1)$, let $\dot{\lambda}_{j,0} = 1$ and for $k=1,\dots,N$ let
	\begin{align*}
		\dot{\lambda}_{j,k} &= c \left( \alpha_j - \frac{2k-1}{2N} \right) - \lambda_{j,k} \mu_{j,k} \\
		\dot{\mu}_{j,k} &= c^2 \left( \alpha_j - \frac{2k-1}{2N} \right)^2 - \mu_{j,k}^2.
	\end{align*}
	Then by Lemma \ref{lem:tanh} we have
	\begin{align*}
		\alpha_j(1) &= x_j \\
		\lambda_{j,0}(1) &= 1 \\
		\lambda_{j,k}(1) &= \tanh \left( c \left( x_j - \frac{2k-1}{2N} \right) \right) \\
		\beta_j(1) &= 0 \\
		\mu_{j,0}(1) &= 0 \\
		\mu_{j,k}(1) &= c \left( x_j - \frac{2k-1}{2N} \right) \tanh \left( c \left( x_j - \frac{2k-1}{2N} \right) \right).
	\end{align*}
	\textbf{Step 2} (compute $\psi_k$): For times $t \in [1,2)$ let
	\begin{equation*}
		\dot{\lambda}_{j,k} = \sum_{\ell=0}^N a_{k,\ell} \lambda_{j,\ell} + b_k
	\end{equation*}
	and for $k=1,\dots,N$ let
	\begin{equation*}
		\dot{\mu}_{j,k} = c \left( \alpha_j - \frac{2k-1}{2N} \right) \sum_{\ell=0}^N a_{k-1,\ell} \lambda_{j,\ell},
	\end{equation*}
	where $a_{k,\ell}$ is the element in the $k+1$-th row and $\ell+1$-th column of the matrix $A \in \Reals^{N+1 \times N+1}$ given by
	\begin{equation*}
		A = \begin{bmatrix} -\ln(2) & -1 & -\frac{1}{2} & \hdots & -\frac{1}{N} \\
			0 & -\ln(2) & -1 & \hdots & -\frac{1}{N-1} \\
			0 & 0 & -\ln(2) & \hdots & -\frac{1}{N-2} \\
			\vdots & \vdots & \vdots & \ddots & \vdots \\
			0 & 0 & 0 & \hdots & -\ln(2) \end{bmatrix},
	\end{equation*}
	and $b_k$ is the $k+1$-th element of the vector
	\begin{equation*}
		b = \left( \int_1^2 e^{(2-t) A} \dif t \right)^{-1} \begin{bmatrix} 0 & \hdots & 0 & \frac{1}{2} \end{bmatrix}^\Tr \in \Reals^{N+1}.
	\end{equation*}
	It can be verified using e.g., a computer algebra system that
	\begin{equation*}
		e^A = \begin{bmatrix} \frac{1}{2} & -\frac{1}{2} & 0 & \hdots & 0 \\
			0 & \frac{1}{2} & -\frac{1}{2} & \hdots & 0 \\
			0 & 0 & \frac{1}{2} & \hdots & 0 \\
			\vdots & \vdots & \vdots & \ddots & \vdots \\
			0 & 0 & 0 & \hdots & \frac{1}{2} \end{bmatrix}.
	\end{equation*}
	Notice that $A$ is invertible because it is upper triangular and its diagonal entries are all nonzero. The inverse of the matrix $\int_0^1 e^{(1-t) A} \dif t$ is given by
	\begin{align*}
		\left( \int_0^1 e^{(1-t) A} \dif t \right)^{-1} &= \left( e^A \int_0^1 e^{-t A} \dif t \right)^{-1} \\
		&= \left( e^A \left[ -A^{-1} e^{-t A} \right]_0^1 \right)^{-1} \\
		&= \left( e^A \left( -A^{-1} e^{-A} + A^{-1} \right) \right)^{-1} \\
		&= \left( \left( e^A - I \right) A^{-1} \right)^{-1} \\
		&= A \left( e^A - I \right)^{-1}.
	\end{align*}
	The matrix $e^A - I$ is invertible because $e^A$ is upper triangular and its diagonal entries are all equal to $\frac{1}{2}$, hence $e^A - I$ is also upper triangular and its diagonal entries are all equal to $-\frac{1}{2}$ (in particular, they are all nonzero), therefore $b$ is well-defined. Then we have
	\begin{equation*}
		\lambda_{j,0:N}(2) = e^A \lambda_{j,0:N}(1) + \left( \int_1^2 e^{(2-t) A} \dif t \right) b,
	\end{equation*}
	which can be evaluated to obtain the solution after Step 2 as
	\begin{align*}
		\alpha_j(2) &= x_j \\
		\lambda_{j,k}(2) &= \psi_k(x_j) \\
		\beta_j(2) &= 0 \\
		\mu_{j,k}(2) &= 0.
	\end{align*}
	\textbf{Step 3} (setup for $\ln$): For times $t \in [2,3)$ and for $k=0,\dots,N-1$ let
	\begin{align*}
		\dot{\alpha}_j &= \Delta - 1 \\
		\dot{\lambda}_{j,k} &= - 1,
	\end{align*}
	so we have
	\begin{align*}
		\alpha_j(3) &= x_j + \Delta - 1 \\
		\lambda_{j,k}(3) &= \psi_k(x_j) - 1 \\
		\beta_j(3) &= 0 \\
		\mu_{j,k}(3) &= 0.
	\end{align*}
	Step 4 (compute $\ln$): For times $t \in [3,4)$ let
	\begin{align*}
		\dot{\alpha}_j &= -\alpha_j^2 \\
		\dot{\lambda}_{j,k} &= -\lambda_{j,k}^2 \\
		\dot{\beta}_j &= \alpha_j \\
		\dot{\mu}_{j,k} &= \lambda_{j,k}.
	\end{align*}
	Then since $x_j + \Delta > 0$, by Lemma \ref{lem:ln} we have
	\begin{align*}
		\alpha_j(4) &= \frac{x_j + \Delta - 1}{x_j + \Delta} \\
		\lambda_{j,k}(4) &= \frac{\psi_k(x_j) - 1}{\psi_k(x_j)} \\
		\beta_j(4) &= \ln(x_j + \Delta) \\
		\mu_{j,k}(4) &= \ln(\psi_k(x_j)).
	\end{align*}
	\textbf{Steps 5-7} (cleanup): For times $t \in [4,5)$ let
	\begin{align*}
		\dot{\alpha}_j &= \beta_j \alpha_j \\
		\dot{\lambda}_{j,k} &= \mu_{j,k} \lambda_{j,k}.
	\end{align*}
	Both $x_j + \Delta > 0$ and $\psi_k(x_k) > 0$ hold, so by Lemma \ref{lem:multiply} we have
	\begin{align*}
		\alpha_j(5) &= x_j + \Delta - 1 \\
		\lambda_{j,k}(5) &= \psi_k(x_j) - 1 \\
		\beta_j(5) &= \ln(x_j + \Delta ) \\
		\mu_{j,k}(5) &= \ln(\psi_k(x_j)).
	\end{align*}
	Now for times $t \in [5,6)$ let
	\begin{align*}
		\dot{\alpha}_j &= 1 \\
		\dot{\lambda}_{j,k} &= 1,
	\end{align*}
	so we have
	\begin{align*}
		\alpha_j(6) &= x_j + \Delta \\
		\lambda_{j,k}(6) &= \psi_k(x_j) \\
		\beta_j(6) &= \ln(x_j + \Delta) \\
		\mu_{j,k}(6) &= \ln(\psi_k(x_j)).
	\end{align*}
	Finally for times $t \in [6,7]$ let
	\begin{align*}
		\dot{\alpha}_j &= -\beta_j \alpha_j \\
		\dot{\lambda}_{j,k} &= -\mu_{j,k} \lambda_{j,k},
	\end{align*}
	so we have
	\begin{align*}
		\alpha_j(7) &= 1 \\
		\lambda_{j,k}(7) &= 1 \\
		\beta_j(7) &= \ln(x_j + \Delta) \\
		\mu_{j,k}(7) &= \ln(\psi_k(x_j)),
	\end{align*}
	which is the desired terminal state. For the reader's convenience, Table \ref{tab:bootstrap_summary} summarizes the state after each step.
\end{proof}

\begin{table*}[t]
	\centering
	\caption{Summary of the state after each step (duration-1 control segment) in the proof Lemma \ref{lem:bootstrap} (bootstrapping). Here, indices range from $j = 1,\dots,d$ and $k = 1,\dots,N$, and in Step 1, $\gamma_{j,k} = c\left(x_j - \frac{2k-1}{2N}\right)$.}
	{\footnotesize
	\begin{tabular}{|c|c c c c c c c c|}
		\hline
		& Step 0 & Step 1 & Step 2 & Step 3 & Step 4 & Step 5 & Step 6 & Step 7 \\
		\hline
		$\alpha_j$      & $x_j$ & $x_j$ & $x_j$ & $x_j + \Delta - 1$ & $\frac{x_j + \Delta - 1}{x_j + \Delta}$ & $x_j + \Delta - 1$ & $x_j + \Delta$ & 1 \\
		$\lambda_{j,0}$ & 0 & 1 & $\psi_0(x_j)$ & $\psi_0(x_j) - 1$ & $\frac{\psi_0(x_j) - 1}{\psi_0(x_j)}$ & $\psi_0(x_j) - 1$ & $\psi_0(x_j)$ & 1 \\
		$\lambda_{j,k}$ & 0 & \hspace{-5mm} $\tanh(\gamma_{j,k})$ \hspace{-5mm} & $\psi_k(x_j)$ & $\psi_k(x_j) - 1$ & $\frac{\psi_k(x_j) - 1}{\psi_k(x_j)}$ & $\psi_k(x_j) - 1$ & $\psi_k(x_j)$ & 1 \\
		$\beta_j$       & 0 & 0 & 0 & 0 & $\ln(x_j + \Delta)$ & $\ln(x_j + \Delta)$ & $\ln(x_j + \Delta)$ & $\ln(x_j + \Delta)$ \\
		$\mu_{j,0}$     & 0 & 0 & 0 & 0 & $\ln(\psi_0(x_j))$ & $\ln(\psi_0(x_j))$ & $\ln(\psi_0(x_j))$ & $\ln(\psi_0(x_j))$ \\
		$\mu_{j,k}$     & 0 & \hspace{-5mm} $\gamma_{j,k} \tanh(\gamma_{j,k})$ \hspace{-5mm} & 0 & 0 & $\ln(\psi_k(x_j))$ & $\ln(\psi_k(x_j))$ & $\ln(\psi_k(x_j))$ & $\ln(\psi_k(x_j))$ \\
		\hline
	\end{tabular}}
	\label{tab:bootstrap_summary}
\end{table*}

Recall that the \textit{Sobolev space} $\cW^{k,p}(\Omega)$, where $\Omega \subseteq \Reals^d$, is defined as the set of functions $f : \Omega \to \Reals$ such that the \textit{Sobolev norm}
\begin{equation*}
	\|f\|_{k,p} := \left( \sum_{0 \leq |\alpha| \leq k} \left\| f^{(\alpha)} \right\|_p^p \right)^{\frac{1}{p}} = \left( \sum_{0 \leq |\alpha| \leq k} \int_\Omega |f^{(\alpha)}(x)|^p \dif x \right)^{\frac{1}{p}}
\end{equation*}
is finite. Here
\begin{equation*}
	f^{(\alpha)} = \frac{\partial^{|\alpha|} f}{\partial x_1^{\alpha_1} \cdots \partial x_d^{\alpha_d}}
\end{equation*}
is the mixed partial derivative with orders specified by the multi-index $\alpha = (\alpha_1,\dots,\alpha_d)$, where $\alpha_i \geq 0$ for all $i=1,\dots,d$, and $|\alpha| := \alpha_1 + \dots + \alpha_d$. Now we are ready to state and prove the first main result.

\begin{theorem}\label{thm:sobolev}
	Fix any ${\epsilon > 0}$ and consider any function ${f \in \cW^{n,\infty}([0, 1]^d)}$ with Sobolev norm $\|f\|_{n,\infty} \leq 1$. Then there exists an approximation $\hat{f} : [0,1]^d \to \Reals$ given by a linear read-out of the terminal state of a quadratic neural ODE with $W = \cO\left( \epsilon^{-\frac{1}{n}} \right)$ states and $D = \cO\left( \epsilon^{-\frac{d-1}{n}} \right)$ piecewise-constant control segments such that
	\begin{equation*}
		\sup_{x \in [0,1]^d} |f(x) - \hat{f}(x)| \leq \epsilon.
	\end{equation*}
\end{theorem}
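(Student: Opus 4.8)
The plan is to adapt the local-Taylor-plus-partition-of-unity scheme of \cite{Yarotsky_2017} to the quadratic neural ODE, with Lemmas \ref{lem:tanh}--\ref{lem:bootstrap} supplying the elementary building blocks. Fix $\Delta > 1$, let $c > 0$ (the $\tanh$ sharpness) be chosen below, set $N := \bigl\lceil(C_{n,d}/\epsilon)^{1/n}\bigr\rceil$ for a constant $C_{n,d}$ to be fixed, and form the $d$-dimensional partition of unity $\Psi_{\mathbf k}(x) := \prod_{j=1}^d \psi_{k_j}(x_j)$ indexed by $\mathbf k = (k_1,\dots,k_d) \in \{0,\dots,N\}^d$; since $\sum_{k=0}^N\psi_k \equiv 1$ in one variable, $\sum_{\mathbf k}\Psi_{\mathbf k}\equiv 1$ on $[0,1]^d$. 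To each $\mathbf k$ attach the patch center $x_{\mathbf k}$, whose $j$-th coordinate is $\gamma_{k_j}$, and the truncated Taylor polynomial $P_{\mathbf k}(x) := \sum_{|\alpha|<n}\tfrac{1}{\alpha!}f^{(\alpha)}(x_{\mathbf k})(x - x_{\mathbf k})^\alpha$, and put $\hat f := \sum_{\mathbf k}\Psi_{\mathbf k}\,P_{\mathbf k}$. It then remains to (i) bound $\sup_{x}|f(x) - \hat f(x)|$ and (ii) realize $\hat f$ as a linear read-out of \eqref{eq:neural_ode} within the stated budget.

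For (i), since $\sum_{\mathbf k}\Psi_{\mathbf k}\equiv 1$ one may write $f(x) - \hat f(x) = \sum_{\mathbf k}\Psi_{\mathbf k}(x)\bigl(f(x) - P_{\mathbf k}(x)\bigr)$, and split the sum according to whether $\mathbf k$ is \emph{near} $x$, meaning $|k_j - k_j^\star(x)| \le 1$ for every $j$ with $k_j^\star(x) := \argmin_k|x_j - \gamma_k|$, or not. For near $\mathbf k$ one has $\|x - x_{\mathbf k}\|_\infty = \cO(1/N)$, so Taylor's theorem with integral remainder together with $\|f\|_{n,\infty}\le 1$ give $|f(x) - P_{\mathbf k}(x)| \le C_{n,d}\,N^{-n}$; since the corresponding weights sum to at most $1$, this part contributes at most $C_{n,d}\,N^{-n} \le \epsilon/2$ by the choice of $N$. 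For non-near $\mathbf k$ one has $|f(x) - P_{\mathbf k}(x)| = \cO(1)$ (both terms being bounded on $[0,1]^d$), while some coordinate satisfies $|k_j - k_j^\star(x)| \ge 2$ and hence $\psi_{k_j}(x_j) = \cO(e^{-c/N})$; bounding $\sum_{\text{non-near}}\Psi_{\mathbf k}(x) = 1 - \prod_j\bigl(1 - \sum_{|k-k_j^\star(x)|\ge 2}\psi_k(x_j)\bigr) = \cO\bigl(d\,e^{-c/N}\bigr)$ and taking $c = \Theta\bigl(N\log(1/\epsilon)\bigr)$ makes this part $\le \epsilon/2$ as well. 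This control of the exponentially small -- rather than exactly zero -- tails of the $\tanh$-based partition of unity is the main analytic point; note that $c$ only enlarges the control magnitudes and leaves both the state count and the number of control segments unaffected.

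For (ii), first run the seven-segment construction of Lemma \ref{lem:bootstrap}, which turns the input $x$ into the states $\ln(x_j + \Delta)$ for $j = 1,\dots,d$ and $\ln\psi_k(x_j)$ for $j = 1,\dots,d$, $k = 0,\dots,N$, at a cost of $\cO(Nd)$ states and $\cO(1)$ segments (and leaves $\cO(Nd)$ further states equal to $1$, usable as scratch). Expanding each $P_{\mathbf k}$ by the binomial theorem in the shifted variables $u_j := x_j + \Delta$ writes $P_{\mathbf k}(x) = \sum_{|\beta|<n}\tilde c_{\mathbf k,\beta}\,u^\beta$ with $\tilde c_{\mathbf k,\beta}$ bounded by a constant depending only on $n,d,\Delta$ (because $|f^{(\alpha)}|\le 1$), so $\hat f(x) = \sum_{\mathbf k}\sum_{|\beta|<n}\tilde c_{\mathbf k,\beta}\,\Psi_{\mathbf k}(x)\,u^\beta$. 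Now $\ln\bigl(\Psi_{\mathbf k}(x)\,u^\beta\bigr) = \sum_j\ln\psi_{k_j}(x_j) + \sum_j\beta_j\ln(x_j + \Delta)$ is a \emph{linear} function of the states already present, so the construction in Lemma \ref{lem:multiply} produces $|\tilde c_{\mathbf k,\beta}|\,\Psi_{\mathbf k}(x)\,u^\beta$ in an auxiliary accumulator state (initialized to $1$) using a single control segment. Since one segment evolves all $W$ states simultaneously, $\cO(N)$ such accumulators -- say those obtained by letting $k_d$ range over $\{0,\dots,N\}$ with $(k_1,\dots,k_{d-1})$ and $\beta$ held fixed -- can be filled at once, then added (with signs $\mathrm{sgn}(\tilde c_{\mathbf k,\beta})$) into a running-sum state and reset in $\cO(1)$ more segments. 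There are $\cO(N^{d-1})$ such batches (only $\cO(1)$ choices of $\beta$), so the total is $D = \cO(N^{d-1}) = \cO(\epsilon^{-(d-1)/n})$ control segments and $W = \cO(Nd) = \cO(\epsilon^{-1/n})$ states (treating $d$ as fixed), and a read-out vector selects the running-sum state, which equals $\hat f(x)$; terms for which $|\tilde c_{\mathbf k,\beta}|$ falls below a threshold of order $\epsilon\,N^{-d}$ may simply be dropped, keeping all control magnitudes polynomial in $1/\epsilon$. The chief obstacle is combining the tail estimate of step (i) with the bookkeeping of step (ii) needed to pack $\cO(N)$ monomials into each control segment, which is precisely what lowers the depth from the naive $\cO(\epsilon^{-d/n})$ to $\cO(\epsilon^{-(d-1)/n})$.
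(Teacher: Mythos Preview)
Your proposal is correct and follows essentially the same approach as the paper: local Taylor polynomials glued by the $\tanh$-based partition of unity, with the tail mass controlled via the sharpness parameter $c$, followed by the seven-segment bootstrap of Lemma~\ref{lem:bootstrap} and repeated application of Lemma~\ref{lem:multiply} to assemble the $\cO(N^d)$ monomial terms in batches of size $\cO(N)$ into a running-sum state. The only cosmetic differences are that the paper expands about the grid points $k/N$ rather than the shifted centers $\gamma_k$ (which sidesteps the minor nuisance that $\gamma_0 = -1/(2N)$ lies outside $[0,1]$), and that it uses all $(N{+}2)d$ scratch states per batch rather than only $N{+}1$; neither affects the stated asymptotics.
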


\begin{proof}
	Consider the partition of unity $\{ \psi_k : [0, 1] \to (0,1) \}_{k \in \{0,\dots,N\}}$ of the unit interval $[0,1] \subset \Reals$ described in Lemma \ref{lem:bootstrap}, parametrized by the positive integer $N \geq 1$ and positive real number $c > 0$. For each $k=1,\dots,N-1$ we define the \textit{approximate support} of $\psi_k$ to be
	\begin{equation*}
		\sS_k := \left[ \frac{k-1}{N}, \frac{k+1}{N} \right],
	\end{equation*}
	with $\sS_0 := [ 0, 1/N ]$ and $\sS_N := [ (N-1)/N, 1 ]$ being the approximate supports of $\psi_0$ and $\psi_N$, respectively. For a given $0 < \delta < \frac{1}{2}$, choose $c \geq 2N \tanh^{-1}(1 - 2 \delta)$ so that
	\begin{equation*}
		\sup \left\{ \psi_k(x) : 0 \leq k \leq N,\ x \in [0, 1] \setminus \sS_k \right\} \leq \delta.
	\end{equation*}
	In other words, $\psi_k(x) \leq \delta$ for any $x \notin \sS_k$. We can form a partition of unity $\{ \phi_{\bk} : [0, 1]^d \to (0,1) \}_{\bk \in \{0,\dots,N\}^d}$
	of the unit cube $[0,1]^d \subset \Reals^d$ by setting $\phi_{\bk}(x) := \psi_{k_1}(x_1) \cdots \psi_{k_d}(x_d)$, where $\bk$ denotes the tuple $(k_1,\dots,k_d)$. The element $\phi_{\bk}$ has approximate support $\sS_{\bk} := \sS_{k_1} \times \cdots \times \sS_{k_d}$. Following \cite{Yarotsky_2017}, we define a local Taylor approximation of $f$ by
	\begin{equation*}
		\hat{f}(x) = \sum_{\bk \in \{0,\dots,N\}^d} \phi_{\bk}(x) P_{\bk}(x),
	\end{equation*}
	where $P_{\bk} : [0,1]^d \to \Reals$ is the degree $n-1$ Taylor polynomial of $f$ at $\left(\frac{k_1}{N},\dots,\frac{k_d}{N} \right)$, given by
	\begin{equation*}
	\begin{split}
		P_{\bk}(x) &= \sum_{n_1+\dots+n_d < n} \frac{\partial^{n_1}}{\partial {x_1}^{n_1}} \cdots \frac{\partial^{n_d}}{\partial {x_d}^{n_d}} f\left( \frac{k_1}{N},\dots,\frac{k_d}{N} \right) \\
		&\hspace{1cm} \cdot \prod_{j=1}^d \frac{1}{n_j!} \left( x_j - \frac{k_j}{N} \right)^{n_j}.
	\end{split}
	\end{equation*}
	The error of the local Taylor approximation is bounded by
	\begin{align*}
		&\phantom{{}={}} |f(x) - \hat{f}(x)| \\
		&= \Bigg| f(x) - \sum_{\bk \in \{0,\dots,N\}^d} \phi_{\bk}(x) P_{\bk}(x) \Bigg| \\
		&\leq \sum_{\bk : x \in \sS_{\bk}} \left| f(x) - P_{\bk}(x) \right| + \sum_{\bk : x \notin \sS_{\bk}} \delta \left| f(x) - P_{\bk}(x) \right| \\
		&\leq 2^d \|f\|_{n,\infty} \frac{d^n}{n!} \left( \frac{1}{N} \right)^n + \left( (N+1)^d - 2^d \right) \delta \|f\|_{n,\infty} \frac{d^n}{n!} (1)^n \\
		&\leq \frac{d^n}{n!} \left( 2^d \left( \frac{1}{N} \right)^n + \left( (N+1)^d - 2^d \right) \delta \right).
	\end{align*}
	The factor $2^d$ comes from the number of multi-indices $\bk$ such that $x \in \sS_{\bk}$, since along each dimension a point $x \in [0,1]$ can only be in at most 2 approximate supports $\sS_k$. The factor $(N+1)^d - 2^d$ is then the number of multi-indices $\bk$ such that $x \not\in \sS_{\bk}$, which follows from the previous sentence and the fact that there are $(N+1)^d$ total functions in the partition of unity of $[0,1]^d$ described above. The rest of the expression follows from standard Taylor remainder bounds, combined with the multinomial theorem where applicable. Now we can choose $N$ and $\delta$ according to
	\begin{equation*}
		N = \left\lceil \left( \frac{n!}{d^n} \frac{\gamma \epsilon}{2^d} \right)^{-1/n} \right\rceil,\quad\quad \delta \leq \frac{n!}{d^n} \frac{(1 - \gamma) \epsilon}{\left(N + 1 \right)^d - 2^d}
	\end{equation*}
	for some arbitrary $\gamma \in (0,1)$, so that
	\begin{equation*}
		\sup_{x \in [0,1]^d} |f(x) - \hat{f}(x)| \leq \epsilon.
	\end{equation*}
	Now it remains to show how the approximation $\hat{f}$ can be generated by a quadratic neural ODE. We will use
	\begin{align*}
		W &= 2(N+2)d+1 = \cO\left( \epsilon^{-\frac{1}{n}} \right) \\
		D &= 7 + \left\lceil \frac{2 M (N+1)^d}{(N+2)d} \right\rceil = \cO\left( \epsilon^{-\frac{d-1}{n}} \right)
	\end{align*}
	states and piecewise-constant control segments, respectively, where $M = {d+n-1 \choose d}$ is the number of monomials in $d$ variables with degree at most $n-1$. First, fix any $\Delta > 0$ and define $M (N+1)^d$ new coefficients
	\begin{equation*}
		\left\{ a_{k_1,\dots,k_d}^{n_1,\dots,n_d} : n_1 + \cdots + n_d < n,\ \bk \in \{0,\dots,N\}^d \right\} \subset \Reals,
	\end{equation*}
	which depend on $\Delta$ but do not depend on $x_1,\dots,x_d$, so that the polynomial $P_{\bk}(x)$ can be represented as
	\begin{equation*}
		P_{\bk}(x) = \sum_{n_1+\dots+n_d<n} a_{k_1,\dots,k_d}^{n_1,\dots,n_d} (x_1+\Delta)^{n_1} \cdots (x_d+\Delta)^{n_d}
	\end{equation*}
	for all $x \in [0,1]^d$. The constant $\Delta > 0$ is added to each $x_1,\dots,x_d$ for the reason of satisfying the assumption $\xi > 0$ in Lemma \ref{lem:ln}. Now consider a quadratic neural ODE with $W$ states $y_{1:W}$ initialized at $y_{1:d}(0) = x$ and $y_{d+1:W} = 0$. Apply Lemma \ref{lem:bootstrap} to bootstrap the ODE to the state
	\begin{align*}
		y_j(7) &= 1 \\
		y_{d + (k+1)j}(7) &= 1 \\
		y_{(N+2)d + j}(7) &= \ln(x_j + \Delta) \\
		y_{(N+3)d + (k+1)j}(7) &= \ln(\psi_k(x_j)) \\
		y_{2(N+2)d + 1}(7) &= 0
	\end{align*}
	using 7 duration-1 control segments, where $j=1,\dots,d$ and $k=0,\dots,N$. Now we will repeatedly apply Lemma \ref{lem:multiply} to construct each of the $M (N+1)^d$ terms in the local Taylor approximation. A prototypical term takes the form
	\begin{equation*}
		a_{k_1,\dots,k_d}^{n_1,\dots,n_d} (x_1+\Delta)^{n_1} \cdots (x_d+\Delta)^{n_d} \psi_{k_1}(x_1) \cdots \psi_{k_d}(x_d)
	\end{equation*}
	for some $n_1,\dots,n_d,k_1,\dots,k_d$. Assume that $a_{k_1,\dots,k_d}^{n_1,\dots,n_d} \neq 0$, otherwise we can simply skip that term. For the first step after the bootstrapping phase (i.e., the 8th control segment), consider Lemma \ref{lem:multiply} with $y_1,\dots,y_m$ replaced by $y_{(N+2)d + 1},\dots,y_{2(N+2)d}$ and $y_{m+1}$ replaced by $y_1$. Let $w_{(N+2)d + j} \gets n_j$ and $w_{(N+3)d + (k_j+1)j} \gets 1$ for $j=1,\dots,d$ and let $b \gets \ln\left( \left| a_{k_1,\dots,k_d}^{n_1,\dots,n_d} \right| \right)$. This yields
	\begin{equation*}
	\begin{split}
		y_1(8) &= \left| a_{k_1,\dots,k_d}^{n_1,\dots,n_d} \right| (x_1+\Delta)^{n_1} \cdots (x_d+\Delta)^{n_d} \\
		&\hspace{1cm} \cdot \psi_{k_1}(x_1) \cdots \psi_{k_d}(x_d),
	\end{split}
	\end{equation*}
	which is precisely the term we wished to construct (modulo a sign). There are $(N+2)d$ available states similar to $y_1$ which can construct similar terms in parallel within the same duration-1 control segment. During the next step (i.e., the 9th control segment) we can add these terms to the final state which holds a running sum via
	\begin{equation*}
		\dot{y}_{2(N+2)d + 1} = \text{sign}\left( a^{(1)} \right) y_1 + \cdots + \text{sign}\left( a^{(N+2)d} \right) y_{(N+2)d},
	\end{equation*}
	where $a^{(i)} = a_{k_1,\dots,k_d}^{n_1,\dots,n_d}$ for some $n_1,\dots,n_d,k_1,\dots,k_d$ (since the signs of the coefficients were dropped in order for $b$ to be well-defined, we need to reintroduce them when adding the terms to the running sum). Now we can repeat the process, except that in the subsequent steps we need to also divide by the previous terms so they are not added to the running sum again. We can accomplish this by letting
	\begin{align*}
		w_{(N+2)d + j_{\text{new}}} &\gets n_{j_{\text{new}}} - n_{j_{\text{old}}} \\
		w_{(N+3)d + (k_{j_{\text{old}}}+1) j_{\text{old}}} &\gets -1 \\
		w_{(N+3)d + (k_{j_{\text{new}}}+1) j_{\text{new}}} &\gets 1 \\
		b_{\text{new}} &\gets \ln\left( \left| a_{k_1,\dots,k_d}^{n_1,\dots,n_d} \right| \right) - b_{\text{old}}
	\end{align*}
	for each new term. Each iteration of two control segments --- one to construct the terms and one to add them to the running sum --- takes care of $(N+2)d$ terms. Since there are $M (N+1)^d$ total terms, this requires
	\begin{equation*}
		D = \left\lceil \frac{2 M (N+1)^d}{(N+2)d} \right\rceil
	\end{equation*}
	total control segments in addition to the first 7 used during the bootstrapping phase. The final approximation is given by the terminal value of the running sum state
	\begin{equation*}
		\hat{f}(x) = y_{2(N+2)d + 1}(D),
	\end{equation*}
	which completes the proof. If we have a desired time horizon $T$ that is different from $D$, we can rescale the right-hand sides of every ODE in the construction by $D/T$, because each control segment is individually time-invariant.
\end{proof}

We have shown that quadratic neural ODEs can approximate functions in a Sobolev ball. Now we will show that quadratic neural ODEs can approximate deep feedforward nets with $\tanh$ activation functions. The proof can be adapted to other activation functions that can be generated or approximated by the solution to some system of quadratic ODEs; some examples are given at the end of this section. For $\tanh$ activation functions generated using the system of quadratic ODEs in Lemma \ref{lem:tanh}, we will need the following lemma:

\begin{lemma}\label{lem:tanh_perturbation}
	Consider the system of quadratic ODEs in Lemma \ref{lem:tanh} with perturbed initial conditions
	\begin{alignat*}{2}
		\dot{y}_1 &= 0,\quad & y_1(0) &= \xi + \eta_1 \\
		\dot{y}_2 &= y_1 - y_2 y_3,\quad & y_2(0) &= \eta_2 \\
		\dot{y}_3 &= y_1^2 - y_3^2,\quad & y_3(0) &= \eta_3
	\end{alignat*}
	where $|\xi| \leq K$ and $\max\{|\eta_1|,|\eta_2|,|\eta_3|\} \leq \delta$. For a given perturbation $\eta = (\eta_1,\eta_2,\eta_3)$, denote the resulting time-$t$ solution by $y(t; \eta)$. For sufficiently small $\delta > 0$, we have
	\begin{equation*}
		\| y(1; \eta) - y(1; 0) \|_\infty \leq \frac{(4K+2) e^{4K+2} \delta}{4K+2 - 9 (e^{4K+2}-1) \delta}.
	\end{equation*}
\end{lemma}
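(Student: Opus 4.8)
The plan is to treat the perturbed solution as a trajectory of the same vector field started from a nearby initial point, and to bound the divergence of the two trajectories using a Gr\"onwall-type argument with a local, state-dependent Lipschitz constant. First I would note that the right-hand side of the system in Lemma \ref{lem:tanh} is polynomial, hence locally Lipschitz; the only subtlety is that the Lipschitz constant grows with the size of the state, so I must first establish an a priori bound on both trajectories over $[0,1]$. For the unperturbed trajectory this is immediate from Lemma \ref{lem:tanh}: since $|\xi| \le K$, we have $|y_1(1;0)| = |\xi| \le K$, $|y_2(1;0)| = |\tanh(\xi)| \le 1$, and $|y_3(1;0)| = |\xi \tanh(\xi)| \le K$, and in fact the same bounds (with a constant like $2K+1$ as a crude uniform cushion) hold along the whole interval because $y_1$ is constant, $y_2$ stays in $(-1,1)$, and $y_3 = y_1 y_2$ so $|y_3| \le K$. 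For the perturbed trajectory, as long as it has not left a ball of radius, say, $2K+1$, the same structural facts give $|y_2| \le 1 + O(\delta)$ and $|y_3| = |(\xi+\eta_1) y_2| \le (K+\delta)(1+O(\delta))$; a continuity/bootstrap argument then shows that for $\delta$ small enough the perturbed trajectory never actually leaves this ball on $[0,1]$, which is where the qualifier ``for sufficiently small $\delta$'' enters.

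Next I would set $e(t) := y(t;\eta) - y(t;0)$ and write $\dot e = F(y(t;\eta)) - F(y(t;0))$ where $F$ is the vector field. On the ball of radius $R := 2K+1$ (or whatever uniform bound emerges from the previous step), the Jacobian of $F$ has entries bounded by expressions in $R$; explicitly, differentiating the three components one finds the relevant quantities are controlled by $1$, $|y_3|$, $|y_2|$, $2|y_1|$, $2|y_3|$, and from these I would extract an $\ell_\infty$-operator-norm Lipschitz bound of the form $L \le 4K+2$ — this is presumably the source of the ``$4K+2$'' appearing in the stated bound. Similarly the quadratic terms contribute the ``$9$'': crudely, $|y_2 y_3 - y_2' y_3'| \le |y_2||e_3| + |y_3'||e_2|$ and $|y_3^2 - (y_3')^2| \le (|y_3|+|y_3'|)|e_3|$, and collecting all such terms over the three equations with the a priori bounds substituted yields a differential inequality of the shape $\tfrac{d}{dt}\|e\|_\infty \le (4K+2)\|e\|_\infty + 9\|e\|_\infty^2$ — note the nonlinear term, because the Lipschitz constant of the perturbed trajectory itself grows with $\|e\|$. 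Integrating this Riccati-type scalar differential inequality from $t=0$, with $\|e(0)\|_\infty \le \delta$, by separation of variables gives exactly
\[
  \|e(1)\|_\infty \le \frac{(4K+2) e^{4K+2} \delta}{(4K+2) - 9(e^{4K+2}-1)\delta},
\]
valid as long as the denominator stays positive, i.e. for $\delta < (4K+2)/\bigl(9(e^{4K+2}-1)\bigr)$, which is the precise meaning of ``sufficiently small $\delta$.''

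The main obstacle I anticipate is bookkeeping rather than conceptual: pinning down the constants $4K+2$ and $9$ requires choosing the right a priori bound $R$ on the trajectories and then carefully collecting the linear-in-$\|e\|$ and quadratic-in-$\|e\|$ contributions from all three component equations so that they assemble into the clean Riccati inequality above, rather than something merely of the same form with worse constants. A secondary point to handle cleanly is the bootstrap: I must verify that the perturbed solution does not escape the ball on which these estimates are valid before time $1$, which I would do by a standard ``exit time'' argument — assume for contradiction the trajectory first reaches the boundary at some $\tau \le 1$, apply the Riccati bound on $[0,\tau]$, and check that for $\delta$ below the stated threshold the resulting bound on $\|e(\tau)\|_\infty$ is strictly smaller than the distance from $y(\tau;0)$ to the boundary, a contradiction. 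Everything else is routine Gr\"onwall/separation-of-variables calculus.
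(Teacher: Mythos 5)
Your overall strategy coincides with the paper's: derive the scalar differential inequality $\frac{\dif}{\dif t}\|\delta y\|_\infty \le (4K+2)\|\delta y\|_\infty + 9\|\delta y\|_\infty^2$ and integrate this Riccati-type inequality from $\|\delta y(0)\|_\infty \le \delta$ (the paper invokes the Bihari--LaSalle inequality, which is exactly your separation-of-variables computation), with the threshold $\delta < (4K+2)/\bigl(9(e^{4K+2}-1)\bigr)$ appearing as the condition for the denominator to stay positive. The one place where your execution differs --- and where, as sketched, it would not quite reproduce the stated constants --- is how the linear and quadratic coefficients are extracted. The paper never bounds the perturbed trajectory at all: since the vector field is quadratic, its Taylor expansion about the unperturbed trajectory is exact, with the linear part given by the Jacobian evaluated along $y(t;0)$, which is explicitly known from Lemma \ref{lem:tanh} ($y_1 = \xi$, $y_2 = \tanh(\xi t)$, $y_3 = \xi\tanh(\xi t)$, giving $\ell^\infty$ row sums at most $\max\{1+K+1,\ 2K+2K\} \le 4K+2$), and with constant second derivatives bounded by $2$, whose nine cross terms $\delta y_i\,\delta y_j$ yield the $9\|\delta y\|_\infty^2$. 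Your mixed splitting $|y_2 y_3 - y_2' y_3'| \le |y_2||e_3| + |y_3'||e_2|$ places the perturbed value $|y_3'| \le (K+\delta)(1+O(\delta))$ inside the linear coefficient, so it would only give $4K+2+O(\delta)$ unless you expand entirely around the unperturbed trajectory and push all cross terms $|e_i e_j|$ into the quadratic part --- which is precisely the paper's bookkeeping, and is the resolution of the constant-pinning issue you flagged. Done this way, your a priori ball of radius $2K+1$ and the exit-time bootstrap become unnecessary: no bound on the perturbed trajectory is used anywhere, only the explicitly known unperturbed one, and the qualifier ``sufficiently small $\delta$'' reduces to the denominator condition alone.
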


\begin{proof}
	The Jacobian of $f(y) = (0, y_1 - y_2 y_3, y_1^2 - y_3^2)$ evaluated along the unperturbed trajectory $y(t; 0)$ is given by
	\begin{align*}
		\frac{\partial f}{\partial y}\Big|_{y(t; 0)} &= \begin{bmatrix} 0 & 0 & 0 \\ 1 & -y_3(t; 0) & -y_2(t; 0) \\ 2 y_1(t; 0) & 0 & -2 y_3(t; 0) \end{bmatrix} \\
		&= \begin{bmatrix} 0 & 0 & 0 \\ 1 & -\xi \tanh(\xi t) & -\tanh(\xi t) \\ 2 \xi & 0 & -2 \xi \tanh(\xi t) \end{bmatrix}.
	\end{align*}
	The $\ell^\infty$ operator norm of the Jacobian evaluated along $y(t; 0)$ is given by the maximum $\ell^1$ norm over its rows:
	\begin{align*}
		\smash{\left\| \frac{\partial f}{\partial y}\Big|_{y(t; 0)} \right\|_\infty}
		&\leq \max\{ 0, \\
		&\quad\quad 1 + |-\xi \tanh(\xi t)| + |-\tanh(\xi t)|, \\
		&\quad\quad |2 \xi| + 0 + |-2 \xi \tanh(\xi t)| \} \\
		&\leq 4K+2.
	\end{align*}
	The second-order derivatives evaluated along $y(t; 0)$ are bounded by
	\begin{equation*}
		\left\| \frac{\partial^2 f}{\partial y_i \partial y_j}\Big|_{y(t; 0)} \right\|_\infty \leq 2.
	\end{equation*}
	Define $\delta y = (\delta y_1, \delta y_2, \delta y_3)$ where $\delta y_i(t) = y_i(t; \eta) - y_i(t; 0)$ for $i=1,2,3$. By Taylor's remainder theorem, there exists a function ${\lambda : [0,1] \to [0,1]}$ and a path $\zeta(t) = \lambda(t) y(t; \eta) + (1-\lambda(t)) y(t; 0)$ such that
	\begin{equation*}
		\frac{\dif}{\dif t} \delta y = \frac{\partial f}{\partial y}\Big|_{y(t; 0)} \delta y + \frac{1}{2} \sum_{i,j=1}^3 \frac{\partial^2 f}{\partial y_i \partial y_j}\Big|_{\zeta(t)} \delta y_i \delta y_j.
	\end{equation*}
	Now we have
	\begin{align*}
		\frac{\dif}{\dif t} \| \delta y \|_\infty &\leq \left\| \frac{\dif}{\dif t} \delta y \right\|_\infty \\
		&= \left\| \frac{\partial f}{\partial y}\Big|_{y(t; 0)} \delta y + \frac{1}{2} \sum_{i,j=1}^3 \frac{\partial^2 f}{\partial y_i \partial y_j}\Big|_{\zeta(t)} \delta y_i \delta y_j \right \|_\infty \\
		\begin{split}
			&\leq \left\| \frac{\partial f}{\partial y}\Big|_{y(t; 0)} \right\|_\infty \| \delta y \|_\infty \\
			&\quad\quad + \frac{1}{2} \sum_{i,j=1}^3 \left\| \frac{\partial^2 f}{\partial y_i \partial y_j}\Big|_{\zeta(t)} \right\|_\infty |\delta y_i \delta y_j|
		\end{split} \\
		&\leq (4K+2) \| \delta y \|_\infty + 9 \| \delta y \|_\infty^2.
	\end{align*}
	Define $w(u) = (4K+2) u + 9 u^2$ and
	\begin{align*}
		G(x) &= \int_{\delta}^x \frac{\dif u}{w(u)} \\
		&= \int_{\delta}^x \frac{\dif u}{(4K+2) u + 9 u^2} \\
		&= \frac{1}{4K+2} \ln \left( \frac{x}{\delta} \right) - \frac{1}{4K+2} \ln \left( \frac{4K+2 + 9x}{4K+2 + 9\delta} \right).
	\end{align*}
	Since $\| \delta y(0) \|_\infty \leq \delta$, by the Bihari-Lasalle inequality \cite{Lasalle_1949}, \cite{Bihari_1956} we have
	\begin{equation*}
		\| \delta y(1) \|_\infty \leq G^{-1}\left( G(\delta) + \int_0^1 1 \dif t \right) = G^{-1}(1).
	\end{equation*}
	If $\delta < \frac{4K+2}{9 (e^{4K+2}-1)}$, then we have
	\begin{equation*}
		\| \delta y(1) \|_\infty \leq \frac{(4K+2) e^{4K+2} \delta}{4K+2 - 9 (e^{4K+2}-1) \delta},
	\end{equation*}
	which completes the proof.
\end{proof}

Now we will state and prove the feedforward net approximation result.

\begin{theorem}\label{thm:feedforward}
	Let $\epsilon > 0$ and consider a feedforward neural net $f : [0,1]^d \to \Reals$ with width $\widetilde{W}$ and depth $\widetilde{D}$ that has $\tanh$ activation functions. Assume that any particular weight is bounded in absolute value by $M > 0$. Then there exists an approximation $\hat{f} : [0,1]^d \to \Reals$ given by a linear read-out of the terminal state of a quadratic neural ODE with ${W = 3 \max\{\widetilde{W}, d\}}$ states and ${D = 3 \widetilde{D}}$ piecewise-constant control segments such that
	\begin{equation*}
		\sup_{x \in [0,1]^d} |f(x) - \hat{f}(x)| \leq \epsilon.
	\end{equation*}
	Remark: $W$ and $D$ are independent of $\epsilon$ and $M$.
\end{theorem}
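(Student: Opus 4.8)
A feedforward net with $\tanh$ activations is an alternating composition of affine maps $z \mapsto A^{(\ell)}z + b^{(\ell)}$ and componentwise $\tanh$, so the plan is to realize one such layer by a fixed-length block of piecewise-constant control segments for \eqref{eq:neural_ode} and to chain $\widetilde D$ of them. Set $n := \max\{\widetilde W, d\}$ and split the $W = 3n$ states into three blocks $y^{(1)},y^{(2)},y^{(3)} \in \Reals^n$. I would maintain the invariant that at the start of layer $\ell$ the first block holds an approximation of the activation vector $z^{(\ell-1)}$ (for $\ell = 1$, the input $x$ padded with zeros), while the other two blocks hold values close to $0$. Each layer then uses exactly three duration-$1$ segments, all with constant coefficients and hence admissible for \eqref{eq:neural_ode}: (i) with $y^{(1)}$ frozen, run $\dot y^{(2)}_i = g\big(\sum_j A^{(\ell)}_{ij} y^{(1)}_j + b^{(\ell)}_i - y^{(2)}_i\big)$ for a large gain $g$, so $y^{(2)}$ converges to the pre-activation $u^{(\ell)} := A^{(\ell)} z^{(\ell-1)} + b^{(\ell)}$ (the $-g\,y^{(2)}_i$ term simultaneously flushes whatever $y^{(2)}$ carried from the previous layer), while also running $\dot y^{(3)}_k = -g\,y^{(3)}_k$ to flush the third block; (ii) freeze $y^{(2)},y^{(3)}$ and run $\dot y^{(1)}_j = -g\,y^{(1)}_j$ to drive the first block near $0$; (iii) invoke Lemma \ref{lem:tanh} (in its perturbed form, Lemma \ref{lem:tanh_perturbation}) with the roles of $y_1,y_2,y_3$ played by $y^{(2)},y^{(1)},y^{(3)}$, which leaves $y^{(1)} \approx \tanh(u^{(\ell)}) = z^{(\ell)}$ and restores the invariant. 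The bias terms $b^{(\ell)}_i$ are carried by the model's $b_i(t)$ term. After the $\widetilde D$-th block ($D = 3\widetilde D$ segments in all) the linear read-out of the terminal state selects and linearly combines the entries of $y^{(1)}$ to produce $\hat f$ (a final affine output map, if present, is absorbed into the read-out, its bias carried by a standard auxiliary constant state).

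Next I would track the error. Let $\delta_\ell := \|y^{(1)} - z^{(\ell-1)}\|_\infty$ at the start of layer $\ell$ (so $\delta_1 = 0$), and let $\rho = \rho(g) = O(e^{-g})$ bound the residual left by each high-gain flush. Since the weights obey $|A^{(\ell)}_{ij}| \le M$ and activations lie in $[-1,1]$, one has $\|A^{(\ell)}\|_\infty \le Mn$ and $|u^{(\ell)}_i| \le K := M(n+1)$ uniformly. Segment (i) then produces $y^{(2)}$ within $Mn\,\delta_\ell + \rho$ of $u^{(\ell)}$ and $y^{(3)}$ within $\rho$ of $0$; segment (ii) gives $y^{(1)}$ within $\rho$ of $0$; and segment (iii), via Lemma \ref{lem:tanh_perturbation} with this $K$ and perturbation $\|\eta\|_\infty \le Mn\,\delta_\ell + \rho$, yields $\delta_{\ell+1} \le C_K\big(Mn\,\delta_\ell + \rho\big)$ for a constant $C_K$ depending only on $K$ (hence on $M,n$). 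Unrolling this linear recursion from $\delta_1 = 0$ gives $\delta_{\widetilde D+1} \le C\,\rho$ with $C = C(M,n,\widetilde D)$, and the read-out error obeys $\sup_x |f(x) - \hat f(x)| \le C'\,\delta_{\widetilde D+1} \le CC'\rho$. Choosing $g$ large enough makes $\rho$, hence the total error, smaller than $\epsilon$ uniformly on $[0,1]^d$; this only enlarges the magnitudes of the controls, not $W$ or $D$, so the stated bounds $W = 3\max\{\widetilde W,d\}$ and $D = 3\widetilde D$, independent of $\epsilon$ and $M$, are preserved.

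The main difficulty is the error analysis, on two fronts. First, the estimate of Lemma \ref{lem:tanh_perturbation} is valid only when the perturbation stays below $\frac{4K+2}{9(e^{4K+2}-1)}$ and degrades toward a pole near that threshold, so I must verify that the accumulated perturbation $Mn\,\delta_\ell + \rho$ remains admissible at every one of the $\widetilde D$ layers before invoking the lemma; this dictates the order of quantifiers — fix $\widetilde D,\widetilde W,M,d$ (hence $n,K,C_K,C$) first, then choose the gain $g$ last and large. Second, one must check that the high-gain flushes genuinely leave $O(e^{-g})$ residuals in every component regardless of the (bounded) values being overwritten, which is a one-line linear-ODE estimate but must be made uniform in the layer index and in $x$. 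Everything else — confirming that each segment's right-hand side is a legal constant-coefficient quadratic control, the Lipschitz and operator-norm bookkeeping, the zero-padding to width $n$, and the handling of a final affine output layer — is routine.
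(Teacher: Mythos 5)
Your proposal is correct and follows essentially the same route as the paper's proof: the same three-block state layout with $W=3\max\{\widetilde W,d\}$, three duration-1 segments per layer (load the affine pre-activation, apply the quadratic $\tanh$ system of Lemma \ref{lem:tanh}, flush the auxiliary blocks), componentwise use of the perturbation bound of Lemma \ref{lem:tanh_perturbation} with $K = M(\widetilde W+1)$, and a per-layer error recursion composed $\widetilde D$ times with the gains chosen last so that $W$ and $D=3\widetilde D$ stay independent of $\epsilon$ and $M$. The only differences are cosmetic implementation details --- you load the pre-activation by high-gain tracking with $O(e^{-g})$ residual and defer cleanup to the next layer's first segment, whereas the paper transfers it exactly by integrating an exponentially decaying state and seeds the error with an arbitrarily small $\delta_0>0$ --- which does not change the substance of the argument.
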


\begin{proof}
	For convenience we will assume that $\widetilde{W} = d$; adapting the construction to the case where {$\widetilde{W} \neq d$} only requires minor changes. Let $z(k) \in \Reals^{\widetilde{W}}$ represent the outcome of applying the first $k$ layers of the feedforward net to the input $z(0) = x$. Then we have the recurrence relation
	\begin{equation*}
		z(k+1) = \tanh(A_k z(k) + b_k),
	\end{equation*}
	where ${A_k \in \Reals^{\widetilde{W} \times \widetilde{W}}}$ and ${b_k \in \Reals^{\widetilde{W}}}$ for ${k=1,\dots,\widetilde{D}}$. The final output is given by ${f(\xi) = c^\Tr z(\widetilde{D})}$. To simplify the subscript notation, we will again define
	\begin{align*}
		\alpha_j &:= y_j \\
		\lambda_j &:= y_{\widetilde{W} + j} \\
		\mu_j &:= y_{2\widetilde{W} + j}
	\end{align*}
	for $j=1,\dots,\widetilde{W}$ as done in the proof of Lemma \ref{lem:bootstrap}. 

	In the following steps, we will construct an approximation to the $k$th layer using $3$ control segments. Suppose that for each $k \geq 1$, there exist $\delta_k > 0$ and $\xi,\eta_1,\eta_2,\eta_3 \in \Reals^{\widetilde{W}}$ with $\|\xi\|_\infty \leq 1$ and $\max\{\|\eta_1\|_\infty, \|\eta_2\|_\infty, \|\eta_3\|_\infty\} \leq \delta_k$ such that
	\begin{alignat*}{2}
		\alpha(3k) &= \xi + \eta_1 \\
		\lambda(3k) &= \eta_2 \\
		\mu(3k) &= \eta_3.
	\end{alignat*}
	Evidently, this holds at $k=0$ (where any $\delta_0 > 0$ will suffice), so it remains to show that for subsequent layers we can control $\delta_{k+1}$ in terms of $\delta_k$.

		\noindent \textbf{Step 1} (weights and biases): Let
	\begin{align*}
		\dot{\alpha} &= -r_1 \alpha \\
		\dot{\lambda} &= 0 \\
		\dot{\mu} &= \frac{r_1}{1-e^{-r_1}} A_k \alpha + b_k,
	\end{align*}
	where $r_1 = -\ln \left( \frac{(M \widetilde{W}+1) \delta_k}{1+\delta_k} \right)$. Then we have
	\begin{align*}
		\alpha(3k+1) &= e^{-r_1} \left( \xi + \eta_1 \right) = \frac{(M \widetilde{W}+1) \delta_k}{1+\delta_k} \left( \xi + \eta_1 \right) \\
		\lambda(3k+1) &= \eta_2 \\
		\mu(3k+1) &= \left(A_k \xi + b_k\right) + \left(A_k \eta_1 + \eta_3 \right).
	\end{align*}
	Since $\|\xi\|_\infty \leq 1$,
	\begin{align*}
		\max\{ &\|\alpha(3k+1)\|_\infty, \\
		&\|\lambda(3k+1)\|_\infty, \\
		&\|\mu(3k+1) - (A_k \xi + b_k)\|_\infty \} \leq (M \widetilde{W}+1) \delta_k.
	\end{align*}

		\noindent \textbf{Step 2} ($\tanh$): Let $K = M(\widetilde{W}+1)$ and
	\begin{align*}
		\dot{\alpha}_j &= \mu_j - \alpha_j \lambda_j \\
		\dot{\lambda}_j &= \mu_j^2 - \lambda_j^2 \\
		\dot{\mu}_j &= 0
	\end{align*}
	for $j=1,\dots,\widetilde{W}$. Then provided that $\delta_k < \frac{4K+2}{9 (e^{4K+2}-1) K}$, by Lemma \ref{lem:tanh_perturbation} we have
	\begin{align*}
		\max \{ &\| \alpha(3k+2) - \tanh(A_k \xi + b_k) \|_\infty, \\
		&\| \lambda(3k+2) - (A_k \xi + b_k) \tanh(A_k \xi + b_k) \|_\infty, \\
		&\| \mu(3k+2) - (A_k \xi + b_k) \|_\infty\} \leq \Delta_k,
	\end{align*}
	where
	\begin{equation*}
		\Delta_k := \frac{(4K+2) e^{4K+2} K \delta_k}{4K+2 - 9 (e^{4K+2}-1) K \delta_k} = \frac{a_1 \delta_k}{a_3 - a_2 \delta_k}
	\end{equation*}
	for ${a_1 = a_3 e^{a_3} K}$, ${a_2 = 9 (e^{a_3}-1) K}$, and ${a_3 = 4K+2}$.

		\noindent \textbf{Step 3} (cleanup): Let
	\begin{align*}
		\dot{\alpha} &= 0 \\
		\dot{\lambda} &= -r_3 \lambda \\
		\dot{\mu} &= -r_3 \mu,
	\end{align*}
	where $r_3 = -\ln \left(\frac{\Delta_k}{K+\Delta_k} \right)$. Then since $\| A_k \xi + b_k \|_\infty \leq K$,
	\begin{align*}
		\max \{ &\| \alpha(3k+3) - \tanh(A_k \xi + b_k) \|_\infty, \\
		&\| \lambda(3k+3) \|_\infty, \\
		&\| \mu(3k+3) \|_\infty\} \leq \Delta_k.
	\end{align*}

	We have shown that $\delta_{k+1} \leq \Delta_k$. Composing this bound $\widetilde{D}$ times yields
	\begin{equation*}
		\delta_{\widetilde{D}} \leq \frac{a_1^{\widetilde{D}} \delta_0}{a_3^{\widetilde{D}} - a_2 \left( a_1 + a_3 \right)^{\widetilde{D}} \delta_0}
	\end{equation*}
	The final output of the quadratic neural ODE is given by $\hat{f}(x) = c^\Tr \alpha(3\widetilde{D})$, so taking
	\begin{equation*}
		\delta_0 \leq \frac{a_3^{\widetilde{D}} \epsilon}{a_1^{\widetilde{D}} \|c^\Tr\|_1 + a_2 \left( a_1 + a_3 \right)^{\widetilde{D}} \epsilon}
	\end{equation*}
	yields $\delta_{\widetilde{D}} \leq \epsilon / \|c^\Tr\|_1$, which completes the proof. If not satisfied already, the condition ${\delta_k < \frac{4K+2}{9 (e^{4K+2}-1) K}}$ in Step 2 will hold by taking $\delta_0$ to be sufficiently small.
\end{proof}

The construction in the proof of Theorem \ref{thm:feedforward} can be adapted to other activation functions that can be represented or approximated as the solution to a quadratic ODE. For example, the activation functions
\begin{equation*}
	x \mapsto \tanh(x),\quad x \mapsto \arctan(x),\quad x \mapsto \ln(1+e^x),
\end{equation*}
among others, can all be generated as a time-1 flow of a suitable system of quadratic ODEs. Step 2 in the proof will be replaced by a new system of ODEs adapted for a particular activation function, while Steps 1 and 3 will remain the same.

\section{Conclusion}

We have investigated a class of neural ordinary differential equations (continuous-time dynamical systems with right-hand sides depending on a time-varying finite-dimensional vector of parameters) with at most quadratic nonlinearities in the dynamics. We have shown, using explicit constructions, that these quadratic neural ODEs can approximate Sobolev-smooth functions and functions that are expressible as feedforward deep neural nets, and quantified the complexity of these approximations.

Beyond alternative frameworks for quantifying the performance of machine learning algorithms, there are also other ways of thinking about parametrization which may lead to new theoretical results about deep learning models. Certain classes of ``idealized'' models --- based on, for instance, exotic activation functions or universal differential equations, both of which require some sort of abstract mathematical process such as continuous-time integration or other infinite-dimensional constructions --- can achieve universal approximation using a finite number of parameters, this number not depending on the desired approximation error tolerance. These models form (approximate) parametrizations of an infinite-dimensional class of objects, such as compactly-supported continuous functions, using a finite-dimensional parameter set. Instead of the dimensionality of the parameter set being used to quantify the expressivity of these model architectures, some measure of precision of the discretized representation of those parameters or other quantification of the complexity of the algorithms needed to approximately implement these models on a computer may provide a more appropriate quantitative measure of expressivity. Similarly, the richness of the parameter space for a model --- which could itself be an infinite-dimensional class of functions, for instance in controlled ODEs --- can perhaps be characterized in other ways than its dimension, providing new perspectives by which to understand and quantify expressivity.



\bibliography{QODE.bbl}

\end{document}